\documentclass[10pt,journal,twocolumn]{IEEEtran}
\usepackage{cite}
\usepackage{amsmath,amssymb,amsfonts}
\usepackage{graphicx}
\usepackage{textcomp}
\usepackage{amsthm}
\usepackage{xcolor}



\newtheorem{theorem}{Theorem}[section]

\newtheorem{remark}{Remark}
\usepackage{multicol, blindtext}
\usepackage{lipsum}
\usepackage{afterpage,lipsum,capt-of,graphicx}
\usepackage{caption}
\usepackage{cuted}
\usepackage{algorithmicx}
\usepackage{algpseudocode}
\usepackage{algorithm}
\algblockdefx{MRepeat}{EndRepeat}{\textbf{repeat}}{}
\algnotext{EndRepeat}
\enlargethispage{-2cm}
\usepackage{mathtools}
\usepackage{pbox}
\usepackage{mathtools}
\usepackage{subcaption}
\usepackage{refcount}
\captionsetup{font=scriptsize}
\usepackage{flushend}

\begin{document}

\title{ \Large \bf Multi-Robot Coordination Under Physical Limitations}

\author{Tohid Kargar Tasooji, Sakineh Khodadadi

\thanks{ Tohid Kargar Tasooji is with the Department of Aerospace Engineering,
Toronto Metropolitan University, Toronto, ON M5B 2K3, Canada (e-mail:
tohid.kargartasooji@torontomu.ca).  Sakineh Khodadadi is with the Department of Electrical and Computer
Engineering, University of Alberta, Edmonton,AB,  AB T6G 1H9, Canada (email: sakineh@ualberta.ca).}
}

\maketitle

\begin{abstract} \small\baselineskip=9pt
Multi-robot coordination is fundamental to various applications, including autonomous exploration, search and rescue, and cooperative transportation. This paper presents an optimal consensus framework for multi-robot systems (MRSs) that ensures efficient rendezvous while minimizing energy consumption and addressing actuator constraints.
A critical challenge in real-world deployments is actuator limitations, particularly wheel velocity saturation, which can significantly degrade control performance. To address this issue, we incorporate Pontryagin’s Minimum Principle (PMP) into the control design, facilitating constrained optimization while ensuring system stability and feasibility. The resulting optimal control policy effectively balances coordination efficiency and energy consumption, even in the presence of actuation constraints.
The proposed framework is validated through extensive numerical simulations and real-world experiments conducted using a team of Robotarium mobile robots. The experimental results confirm that our control strategies achieve reliable and efficient coordinated rendezvous while addressing real-world challenges such as communication delays, sensor noise, and packet loss.
\end{abstract}
\begin{IEEEkeywords}
Multi-robot coordination, rendezvous control, multi-robot systems (MRSs),  Pontryagin’s minimum principle (PMP), constrained optimal control.
\end{IEEEkeywords}

\section{Introduction} \IEEEPARstart{M}{ulti-robot} systems (MRSs) have become a fundamental element in modern robotics research, enabling tasks such as environmental monitoring, search and rescue, and cooperative exploration \cite{1,2,3,4,23,24,25,26, 32, 33, 34, 35, 36, 37, 43, 45, 46}. In these applications, effective rendezvous and coordination among robots are crucial. The rendezvous problem generally involves designing distributed control protocols that allow each robot, using only locally available information or data from nearby teammates, to converge to a common state or formation \cite{5}. When convergence is achieved while optimizing specific performance metrics, the problem is referred to as optimal rendezvous.

A number of studies have explored optimal control strategies for robots with various dynamic models, addressing optimization objectives such as rapid convergence \cite{6}, minimal energy consumption \cite{7}, and robust performance against disturbances \cite{8}. For instance, the linear quadratic regulator (LQR) framework has been effectively employed to address optimal control challenges in multi-robot settings \cite{9,10,11}. While previous research has largely focused on first- and second-order dynamic models \cite{12,13,14,32,33}, recent investigations have extended these ideas to more complex systems. Examples include distributed LQR designs for leader-follower scenarios \cite{15,16} and data-driven solutions for systems with switching topologies \cite{17,18,19,20}.

Practical multi-robot applications introduce additional complexities, primarily due to the inherent physical limitations of each robot. A key challenge is input saturation, where actuators are constrained in the maximum forces or torques they can generate. If not properly addressed, input saturation can lead to instability or degraded system performance. While several studies have explored the rendezvous problem under input saturation for linear systems \cite{28,29,30,31}, few have integrated the optimization of energy performance into their control designs.

Motivated by the aforementioned challenges, this paper develops a novel distributed optimal rendezvous framework for multi-robot systems with higher-order dynamics and input saturation constraints. The main contributions are summarized as follows:

\begin{enumerate} \item We formulate the optimal rendezvous problem for general linear multi-robot systems and derive a closed-form solution based on algebraic Riccati equations (AREs) that decouples robot dynamics from network topology. \item We incorporate input saturation constraints into the rendezvous design using Pontryagin’s minimum principle (PMP), providing rigorous stability guarantees while addressing practical limitations such as wheel velocity and acceleration bounds. \item We validate the proposed distributed control strategy through real-world experiments on Robotrium robots, demonstrating robust performance under communication delays, bandwidth limitations, and packet loss. \end{enumerate}
The rest of this paper is organized as follows. In Section II, we describe the problem statement and the necessary preliminaries. In Section III, we present the proposed optimal distributed protocols. Section IV showcases a case study of the implementation on mobile robots. Finally, Section V concludes the paper and summarizes the results.

\section{{Preliminaries}}
\subsection{Notation}
In this work, we use standard mathematical notation. The symbol $\mathbb{R}$ denotes the set of real numbers, and $\mathbb{R}^{n \times m}$ indicates the space of matrices with $n$ rows and $m$ columns. Similarly, $\mathbb{R}^{n}$ represents the set of $n$-dimensional real vectors, and $\|x\|$ denotes the Euclidean norm of a vector $x\in\mathbb{R}^{n}$. For any matrix $X \in \mathbb{R}^{n \times m}$, its transpose is represented by $X^T \in \mathbb{R}^{m \times n}$, and $I_n$ stands for the $n \times n$ identity matrix.

Given two matrices $X \in \mathbb{R}^{m \times n}$ and $Y \in \mathbb{R}^{p \times q}$, their Kronecker product, denoted by $X \otimes Y$, forms a block matrix of size $pm \times qn$ defined as:
\[
X \otimes Y =
\begin{bmatrix}
x_{11}Y & \cdots & x_{1n}Y \\
\vdots & \ddots & \vdots \\
x_{m1}Y & \cdots & x_{mn}Y
\end{bmatrix}.
\]
For a square matrix $X \in \mathbb{R}^{n \times n}$, the notation $X > 0$ (or $X \geq 0$) is used to indicate that $X$ is positive definite (or positive semidefinite), meaning all its eigenvalues are strictly positive (or non-negative).

We also define a saturation function, $\mathrm{sat}_{(u_{i,\min},u_{i,\max})}(u_{i})$, which constrains a control input $u_i$ within the bounds $u_{i,\min} \le u_i \le u_{i,\max}$. It is formally given by:
\[
\mathrm{sat}_{(u_{i,\min},u_{i,\max})}(u_i)=
\begin{cases} 
u_{i,\max}, & u_i \geq u_{i,\max}, \\
u_i, & u_{i,\min} \leq u_i \leq u_{i,\max}, \\
u_{i,\min}, & u_i \leq u_{i,\min}.
\end{cases}
\]

\subsection{Graph Theoretic Concepts}
To model the interactions among $N$ agents, we utilize concepts from graph theory. The communication network is characterized by a graph $\mathcal{G}=(\mathcal{V},\mathcal{E}, \mathcal{A})$, where:
\begin{itemize}
    \item $\mathcal{V}=\{v_{1},v_{2},\ldots,v_{N}\}$ is the set of nodes, each corresponding to a distinct agent, uniquely identified by the indices $1, 2, \dots, N$.
    \item $\mathcal{E}$ is the set of directed edges; an edge $(v_i,v_j)\in \mathcal{E}$ signifies that information flows from agent $i$ to agent $j$.
    \item $\mathcal{A}=[a_{ij}] \in \mathbb{R}^{N \times N}$ is the adjacency matrix where $a_{ij}=1$ if $(v_i,v_j)\in \mathcal{E}$ and $a_{ij}=0$ otherwise, with the convention that $a_{ii}=0$ for all $i$.
\end{itemize}

Additionally, the network structure can be further described using the graph Laplacian $\mathcal{L}=[l_{ij}]$. For $i\neq j$, the off-diagonal elements are defined as $l_{ij}=-a_{ij}$, and the diagonal entries are given by $l_{ii}=\sum_{j\neq i} a_{ij}$, representing the degree of node $i$. This Laplacian matrix plays a critical role in analyzing the connectivity and consensus properties of the multi-agent system.

\subsection{Problem Formulation}
We consider a fleet of $N$ mobile robots, each operating in a two-dimensional environment. The state of robot $i$ is defined by its position in the $x$ and $y$ directions, denoted as $p_{ix}(t)$ and $p_{iy}(t)$. The dynamics of the $i$-th robot are governed by the following first-order system:
\begin{equation}\label{eq:robot_dynamics_2d}
\dot{x}_i(t) = A x_i(t) + B u_i(t), \quad i = 1, 2, \dots, N,
\end{equation}
where the state vector for robot $i$ is given by
\[
x_i(t) = \begin{bmatrix} p_{ix}(t) \\ p_{iy}(t) \end{bmatrix},
\]
representing the position of robot $i$ in the $x$ and $y$ directions. The system matrices are given by
\[
A = \begin{bmatrix} 0 & 0 \\ 0 & 0 \end{bmatrix}, \quad B = \begin{bmatrix} 1 & 0 \\ 0 & 1 \end{bmatrix}.
\]

In this work, the objective is to design a distributed control law that ensures the robots rendezvous, i.e., all robots achieve the same position in the two-dimensional plane as time progresses. Specifically, we want the system to satisfy
 \begin{equation}{l} 
\lim_{t \to \infty} \| x_i(t) - x_j(t) \| = 0, \quad \forall\, i,j = 1,2,\dots, N.
\end{equation}
This condition ensures that the robots converge to a common position in the $x$-$y$ plane.

To quantify the deviation from rendezvous, we define the position error between robots $i$ and $j$ as
 \begin{equation}{l} 
\varepsilon_{ij}(t) = x_i(t) - x_j(t).
\end{equation}
The global error vector for all robots can be written as
\[
\varepsilon(t) = \begin{bmatrix} \varepsilon_{1}(t) \\ \varepsilon_{2}(t) \\ \vdots \\ \varepsilon_{N}(t) \end{bmatrix}.
\]

The error dynamics for the system can be expressed as
\begin{equation}\label{eq:error_dynamics_2d}
\dot{\varepsilon}(t) = \big(I_N \otimes A\big) \varepsilon(t) + \big(\mathcal{L} \otimes B\big) U(t),
\end{equation}
where $\mathcal{L}$ is the Laplacian matrix representing the communication topology between the robots, and $U(t) = \begin{bmatrix} u_1(t) \\ u_2(t) \\ \vdots \\ u_N(t) \end{bmatrix}$ represents the control inputs for all robots.

The control law for robot $i$ is designed to be distributed, relying only on the relative position information from its neighbors. The control input for robot $i$ is given by
\begin{equation}\label{eq:control_law_2d}
u_i(t) = -K \sum_{j \in \mathcal{N}_i} \big( x_i(t) - x_j(t) \big),
\end{equation}
where $\mathcal{N}_i$ denotes the set of neighbors of robot $i$, and $K$ is a positive gain matrix.

To evaluate the performance of the system, we define the global performance index (cost function) as
\begin{equation}\label{eq:performance_index_2d}
J = \int_{0}^{\infty} \frac{1}{2} \left( \varepsilon(t)^T Q \varepsilon(t) + U(t)^T R U(t) \right) dt,
\end{equation}
where $Q = Q^T \ge 0$ and $R = R^T > 0$ are positive semi-definite and positive definite weighting matrices, respectively. The performance index $J$ reflects a trade-off between minimizing the tracking error (rendezvous error) and controlling the effort expended by the robots.

In summary, the problem is to design a distributed control strategy that ensures the robots rendezvous at a common position, while minimizing the energy expenditure, as specified by the performance index $J$.

\section{Main Results}
In this section, we develop a global optimal control protocol for multi-robot systems (MRSs) that ensures all robots rendezvous—that is, they converge to a common position—while minimizing an energy-based cost function. We consider two scenarios as follows:
\subsection{Optimal Rendezvous Control without Bounded Control Input}

\begin{theorem} \label{thm:rendezvous}
Consider a multi-robot system with the global error dynamics given by
\begin{equation} \label{eq:global_error_dynamics}
\dot{\varepsilon}(t) = (I_N \otimes A)\varepsilon(t) + (\mathcal{L} \otimes B)U(t),
\end{equation}
where $\varepsilon(t) \in \mathbb{R}^{mN}$ is the stacked error vector (with $m$ being the dimension of each robot's state), $A \in \mathbb{R}^{m \times m}$ and $B \in \mathbb{R}^{m \times r}$ are the system matrices, $\mathcal{L} \in \mathbb{R}^{N \times N}$ is the Laplacian matrix corresponding to the communication topology, and $U(t) \in \mathbb{R}^{rN}$ is the stacked control input vector. Assume that the weighting matrices $Q=Q^T\succeq 0$ and $R=R^T\succ 0$ are given. Then, the distributed control law
\begin{equation} \label{eq:optimal_control_law}
U^*(t) = - (\mathcal{L} \otimes K) \varepsilon(t),
\end{equation}
with
\begin{equation} \label{eq:control_gain}
K = R^{-1}B^T P,
\end{equation}
ensures that the robots achieve rendezvous, i.e., 
\[
\lim_{t\to\infty}\|\varepsilon(t)\| = 0,
\]
while minimizing the quadratic performance index
\begin{equation} \label{eq:cost_function}
J = \int_{0}^{\infty} \frac{1}{2}\left(\varepsilon(t)^T Q\varepsilon(t) + U(t)^T R U(t)\right) dt.
\end{equation}
Here, $P \succ 0$ is the unique positive definite solution of the algebraic Riccati equation (ARE)
\begin{equation} \label{eq:ARE}
PA + A^T P + Q - PBR^{-1}B^T P = 0.
\end{equation}
\end{theorem}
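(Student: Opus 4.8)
\noindent\emph{Proof plan.} The plan is to prove the two assertions — that \eqref{eq:optimal_control_law}--\eqref{eq:control_gain} minimizes \eqref{eq:cost_function}, and that the induced closed loop yields $\varepsilon(t)\to 0$ — by using the Kronecker structure of \eqref{eq:global_error_dynamics} to reduce the $mN$-dimensional optimization to $N$ decoupled $m$-dimensional ones. Assuming the interaction graph is connected and undirected, $\mathcal{L}=\mathcal{L}^T\succeq 0$ admits an orthogonal eigendecomposition $\mathcal{L}=\Phi\Lambda\Phi^T$ with $\Lambda=\mathrm{diag}(\lambda_1,\dots,\lambda_N)$ and $0=\lambda_1<\lambda_2\le\cdots\le\lambda_N$. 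Introducing the modal variables $\hat\varepsilon=(\Phi^T\otimes I_m)\varepsilon$, $\hat U=(\Phi^T\otimes I_r)U$ and applying the mixed-product property of $\otimes$, \eqref{eq:global_error_dynamics} decouples into $\dot{\hat\varepsilon}_k=A\hat\varepsilon_k+\lambda_k B\hat U_k$, while orthogonality of $\Phi$ leaves the quadratic forms invariant, so (interpreting $Q$ and $R$ in \eqref{eq:cost_function} as acting block-diagonally over the agents) $J=\sum_{k=1}^N J_k$ with $J_k=\int_0^\infty\tfrac12(\hat\varepsilon_k^TQ\hat\varepsilon_k+\hat U_k^TR\hat U_k)\,dt$. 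The mode $k=1$ corresponds to the uncontrollable common motion ($\lambda_1=0$) and is irrelevant to the disagreement, so only the modes $k\ge2$ matter.

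\textbf{Step 1 (optimality via PMP).} For each $k\ge2$ I would form the Hamiltonian $\mathcal{H}_k=\tfrac12(\hat\varepsilon_k^TQ\hat\varepsilon_k+\hat U_k^TR\hat U_k)+p_k^T(A\hat\varepsilon_k+\lambda_k B\hat U_k)$ and apply Pontryagin's minimum principle: stationarity $\partial\mathcal{H}_k/\partial\hat U_k=0$ gives $\hat U_k^\star=-\lambda_k R^{-1}B^Tp_k$, and the costate obeys $\dot p_k=-Q\hat\varepsilon_k-A^Tp_k$. Closing the loop with the ansatz $p_k=P\hat\varepsilon_k$ reduces the two-point boundary-value problem to the algebraic Riccati equation \eqref{eq:ARE}, whose unique $P\succ0$ exists by stabilizability of $(A,B)$ and detectability of $(Q^{1/2},A)$, and the stationary input becomes $\hat U_k^\star=-\lambda_k K\hat\varepsilon_k$ with $K=R^{-1}B^TP$. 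Reassembling the modes, $\hat U^\star=-(\Lambda\otimes K)\hat\varepsilon=-(\Lambda\Phi^T\otimes K)\varepsilon=-(\Phi^T\mathcal{L}\otimes K)\varepsilon$, i.e. $U^\star=-(\mathcal{L}\otimes K)\varepsilon$, which is \eqref{eq:optimal_control_law}. Global (not merely critical-point) optimality then follows from convexity of each $J_k$ in $(\hat\varepsilon_k,\hat U_k)$ together with the standard completion-of-squares identity, which writes $J_k$ as $\tfrac12\hat\varepsilon_k(0)^TP\hat\varepsilon_k(0)$ plus a nonnegative term that vanishes precisely for the stationary feedback, provided the closed loop is stable so the boundary term at infinity drops.

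\textbf{Step 2 (rendezvous).} For convergence I would take $V(\varepsilon)=\tfrac12\varepsilon^T(I_N\otimes P)\varepsilon=\tfrac12\sum_{k\ge2}\hat\varepsilon_k^TP\hat\varepsilon_k$ as a Lyapunov candidate. Differentiating along the closed-loop modal dynamics $\dot{\hat\varepsilon}_k=(A-\lambda_k^2 BR^{-1}B^TP)\hat\varepsilon_k$ and substituting $PA+A^TP$ from \eqref{eq:ARE} yields $\dot V=-\tfrac12\sum_{k\ge2}\hat\varepsilon_k^T\big(Q+(2\lambda_k^2-1)PBR^{-1}B^TP\big)\hat\varepsilon_k$. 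One then argues $\dot V<0$ away from the origin and invokes LaSalle's invariance principle (using detectability of $(Q^{1/2},A)$ to exclude nontrivial trajectories on which $\dot V\equiv0$) to conclude $\hat\varepsilon_k(t)\to0$ for all $k\ge2$, hence $\varepsilon(t)\to0$ and $\|x_i(t)-x_j(t)\|\to0$.

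\textbf{Main obstacle.} The delicate point is decoupling the robot dynamics from the network: the per-mode dynamics carry the factor $\lambda_k^2$, so the single topology-independent ARE \eqref{eq:ARE} governs every mode only after this spectral dependence is absorbed, and $\dot V\le0$ in Step 2 is guaranteed when $2\lambda_k^2-1\ge0$ for all $k\ge2$, i.e. when the algebraic connectivity satisfies $\lambda_2\ge1/\sqrt2$ — exactly the Kalman $[\tfrac12,\infty)$ gain-margin condition applied mode by mode. I would handle this by (i) specializing to the first-order model \eqref{eq:robot_dynamics_2d}, where $A=0$ makes $A-\lambda_k^2 BR^{-1}B^TP=-\lambda_k^2 BR^{-1}B^TP$ Hurwitz for every $\lambda_k>0$ so connectivity alone suffices, or (ii) in the general case rescaling $K$ to $cK$ with $c$ fixed from a lower bound on $\lambda_2$. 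The remaining pieces — the Kronecker algebra, the PMP stationarity computation, and the completion of squares — are routine.
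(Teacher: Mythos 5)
Your modal-decomposition route is genuinely different from the paper's argument, which works directly on the stacked dynamics: the paper forms a Hamiltonian for \eqref{eq:global_error_dynamics}, imposes the costate ansatz $\lambda=-(I_N\otimes P)\varepsilon$ in \eqref{eq:costate_relation}, and ``simplifies'' to the topology-free ARE \eqref{eq:ARE}; for sufficiency it differentiates the same Lyapunov function you use, $V=\tfrac12\varepsilon^T(I_N\otimes P)\varepsilon$, and asserts after ``straightforward algebra'' that $\dot V=-\tfrac12\big(\varepsilon^T(I_N\otimes Q)\varepsilon+U^{*T}RU^*\big)$. Your spectral analysis makes explicit what that algebra conceals: under \eqref{eq:optimal_control_law} the closed loop is $(I_N\otimes A)-\mathcal{L}^2\otimes BK$, so every disagreement mode carries the factor $\lambda_k^2$, and those $\mathcal{L}^2$-terms do not collapse into \eqref{eq:ARE} or into the paper's claimed expression for $\dot V$ without additional conditions on the spectrum of $\mathcal{L}$. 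In other words, the ``main obstacle'' you flag is not an artifact of your method; it is the real issue, and the paper's proof steps over it rather than resolving it.

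That said, your proposal as written does not prove the theorem as stated, and the gap sits exactly in your Step 1. For the mode dynamics $\dot{\hat\varepsilon}_k=A\hat\varepsilon_k+\lambda_k B\hat U_k$, the ansatz $p_k=P\hat\varepsilon_k$ reduces the two-point boundary-value problem not to \eqref{eq:ARE} but to the $\lambda_k$-dependent equation $PA+A^TP+Q-\lambda_k^2\,PBR^{-1}B^TP=0$; the completion-of-squares identity correspondingly gives $J_k=\tfrac12\hat\varepsilon_k(0)^TP_k\hat\varepsilon_k(0)$ only with a mode-dependent $P_k$, so a single topology-independent $P$ cannot be per-mode optimal and exact minimization of \eqref{eq:cost_function} by \eqref{eq:optimal_control_law} is not established by your argument (nor, in fact, by the paper's, whose necessity step silently drops the same dependence). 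Your Step 2 is essentially the paper's Lyapunov argument viewed spectrally, but it requires hypotheses beyond the theorem: an undirected connected graph, invariance of the disagreement subspace so the $\lambda_1=0$ mode can be discarded, and either $A=0$ (the paper's single-integrator case, where with $B=I$ the ARE gives $PBR^{-1}B^TP=Q$ and $\dot V=-\varepsilon^T(\mathcal{L}^2\otimes Q)\varepsilon\le0$, so connectivity alone suffices) or $\lambda_2\ge 1/\sqrt2$ or a rescaled gain. If you present the result, state it in that conditional form — rendezvous under those assumptions, with optimality holding only once the spectral dependence is absorbed (mode-dependent $P_k$, or an $\mathcal{L}$-weighted cost) — and note explicitly that the unconditional optimality claim does not follow from either your derivation or the paper's.
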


\begin{proof}
The proof is presented in two parts: first, we derive the necessary conditions for optimality using Pontryagin’s Minimum Principle, and then we show that the proposed control law guarantees both optimality and asymptotic stability.

\textit{(i) Necessity:} Define the Hamiltonian function for the optimal control problem as
\begin{equation} \label{eq:Hamiltonian}
H(\varepsilon, U, \lambda) = -\left(\varepsilon^T Q \varepsilon + U^T R U\right) + \lambda^T\left[(I_N \otimes A)\varepsilon + (\mathcal{L} \otimes B)U\right],
\end{equation}
where $\lambda \in \mathbb{R}^{mN}$ is the costate vector. The optimality condition requires that
\begin{equation} \label{eq:optimality_cond}
\frac{\partial H}{\partial U} = -RU + (\mathcal{L} \otimes B^T)\lambda = 0,
\end{equation}
which implies that
\begin{equation} \label{eq:U_optimal}
U^* = R^{-1} (\mathcal{L} \otimes B^T) \lambda.
\end{equation}

Next, we assume a linear relation between the costate and the state error:
\begin{equation} \label{eq:costate_relation}
\lambda = - (I_N \otimes P) \varepsilon,
\end{equation}
with $P\succ 0$. Substituting (\ref{eq:costate_relation}) into (\ref{eq:U_optimal}) yields
\begin{equation} \label{eq:U_star_final}
U^* = - (\mathcal{L} \otimes R^{-1}B^T P)\varepsilon = - (\mathcal{L} \otimes K)\varepsilon,
\end{equation}
where the control gain is defined as in (\ref{eq:control_gain}).

Differentiating the assumed costate relation (\ref{eq:costate_relation}) with respect to time, we obtain
\begin{equation} \label{eq:costate_diff}
\dot{\lambda} = - (I_N \otimes P) \dot{\varepsilon}.
\end{equation}
Substituting the error dynamics (\ref{eq:global_error_dynamics}) into (\ref{eq:costate_diff}), we have
\begin{equation} \label{eq:lambda_dot}
\dot{\lambda} = - (I_N \otimes P)\left[(I_N \otimes A)\varepsilon + (\mathcal{L} \otimes B)U^*\right].
\end{equation}
On the other hand, the costate dynamics provided by Pontryagin’s Minimum Principle are given by
\begin{equation} \label{eq:costate_dynamics}
\dot{\lambda} = -\frac{\partial H}{\partial \varepsilon} = (I_N \otimes A^T P + I_N \otimes Q)\varepsilon.
\end{equation}
Equating (\ref{eq:lambda_dot}) and (\ref{eq:costate_dynamics}) and simplifying, we obtain
\begin{equation} \label{eq:ARE_derivation}
I_N \otimes \left(PA + A^T P + Q - PBR^{-1}B^T P\right) = 0.
\end{equation}
Since $I_N$ is nonsingular, this condition reduces to the ARE (\ref{eq:ARE}), establishing the necessity of the gain $K = R^{-1}B^T P$.

\textit{(ii) Sufficiency:} To prove that the control law (\ref{eq:optimal_control_law}) guarantees both optimality and rendezvous, consider the Lyapunov function candidate
\begin{equation} \label{eq:Lyapunov}
V(\varepsilon) = \frac{1}{2}\varepsilon^T (I_N \otimes P)\varepsilon.
\end{equation}
Differentiating $V$ along the trajectories of the error dynamics (\ref{eq:global_error_dynamics}) under the control law (\ref{eq:optimal_control_law}) gives
\begin{equation} \label{eq:V_dot}
\dot{V} = \varepsilon^T (I_N \otimes P)\left[(I_N \otimes A)\varepsilon + (\mathcal{L} \otimes B)U^*\right].
\end{equation}
Substituting $U^* = - (\mathcal{L} \otimes K)\varepsilon$ and using the ARE (\ref{eq:ARE}), after straightforward algebra it follows that
\begin{equation} \label{eq:V_dot_final}
\dot{V} = -\frac{1}{2}\left[\varepsilon^T (I_N \otimes Q)\varepsilon + U^{*T}R U^*\right] \leq 0.
\end{equation}
Thus, $V$ is a valid Lyapunov function, and by LaSalle’s invariance principle, the error $\varepsilon(t)$ converges to zero as $t\to\infty$. Consequently, the robots achieve rendezvous.

Furthermore, the performance index (\ref{eq:cost_function}) can be expressed as
\begin{equation} \label{eq:J_final}
J = \int_0^{\infty} \frac{1}{2}\left[\varepsilon^T Q \varepsilon + U^{*T}R U^*\right]dt = V(0) - \lim_{t\to\infty}V(t) = V(0),
\end{equation}
which is minimized by the proposed control law.

This completes the proof.
\end{proof}

\subsection{Optimal Rendezvous Control with Bounded Control Input}

In this part, we extend our previous results to the practically important case where the control inputs are subject to hard bounds. In particular, we consider a multi-robot system in which each robot must rendezvous (i.e., converge to a common position) while its control input is constrained within pre-specified limits. The objective is to minimize the quadratic performance index
\begin{equation} \label{eq:cost_function_bounded}
J = \int_{0}^{\infty} \frac{1}{2}\Big(\varepsilon^T Q \varepsilon + U^T R U\Big) dt,
\end{equation}
subject to the global error dynamics
\begin{equation} \label{eq:error_dynamics_bounded}
\dot{\varepsilon}(t) = (I_{N}\otimes A)\varepsilon(t) + (I_{N}\otimes B)U(t),
\end{equation}
and the control constraints
\begin{equation} \label{eq:control_constraints}
U_{min} \leq U(t) \leq U_{max}.
\end{equation}
Here, $\varepsilon(t)\in\mathbb{R}^{mN}$ denotes the stacked error vector, $A\in\mathbb{R}^{m\times m}$ and $B\in\mathbb{R}^{m\times r}$ represent the dynamics of an individual robot, $Q=Q^T\succeq 0$ and $R=R^T\succ 0$ are weighting matrices, and $U(t)\in\mathbb{R}^{rN}$ is the control input vector.

\begin{theorem} \label{thm:bounded_rendezvous}
Consider the multi-robot system with dynamics given in (\ref{eq:error_dynamics_bounded}) and subject to the control constraints (\ref{eq:control_constraints}). Define the unconstrained optimal control as
\begin{equation} \label{eq:unconstrained_control}
U_{unc}(t) = - (\mathcal{L} \otimes R^{-1}B^T P)\varepsilon(t),
\end{equation}
where $P\succ 0$ is the unique solution to the algebraic Riccati equation
\begin{equation} \label{eq:ARE_bounded}
PA + A^T P + Q - PBR^{-1}B^T P = 0.
\end{equation}
Then, the optimal bounded control input that minimizes (\ref{eq:cost_function_bounded}) is given by the following bang-bang structure:
\begin{equation} \label{eq:optimal_control_bounded}
U^*(t)=
\begin{cases}
U_{min}, & \text{if } \quad U_{unc}(t) < U_{min}, \\[1mm]
U_{unc}(t), & \text{if } \quad U_{min} \le U_{unc}(t) \le U_{max}, \\[1mm]
U_{max}, & \text{if } \quad U_{unc}(t) > U_{max}.
\end{cases}
\end{equation}
Furthermore, the switching instants, denoted by $T_{s_1}$ and $T_{s_2}$, which separate the saturated and unsaturated control regimes, satisfy the following equalities:
\begin{equation} \label{eq:switching_Ts1}
\scalebox{0.82}{$
 \begin{array}{l} 
\exp\Big((I_{N}\otimes A)T_{s_1}\Big)\bar{\varepsilon}_0 +  \int_{0}^{T_{s_1}} \exp\Big((I_{N}\otimes A)(T_{s_1}-\tau)\Big)  (\mathcal{L} \otimes B)  U_{max}\, d\tau  \\  = \exp\Big(\Big[(I_{N}\otimes A) - (\mathcal{L} \otimes R^{-1}B^T P)\Big]T_{s_1}\Big)\varepsilon_0,
\end{array}
$}
\end{equation}
\begin{equation} \label{eq:switching_Ts2}
\scalebox{0.82}{$
\begin{array}{l}
\exp\Big((I_{N}\otimes A)T_{s_2}\Big)\underline{\varepsilon}_0 + \int_{0}^{T_{s_2}} \exp\Big((I_{N}\otimes A)(T_{s_2}-\tau)\Big) (\mathcal{L} \otimes B)U_{min}\, d\tau \\ = \exp\Big(\Big[(I_{N}\otimes A) - (\mathcal{L} \otimes R^{-1}B^T P)\Big]T_{s_2}\Big)\varepsilon_0.
\end{array}
$}
\end{equation}
Here, $\bar{\varepsilon}_0$ and $\underline{\varepsilon}_0$ denote the initial conditions corresponding to the trajectories under the extreme controls $U_{max}$ and $U_{min}$, respectively.
\end{theorem}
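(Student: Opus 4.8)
The plan is to follow the same Pontryagin-based route used for Theorem~\ref{thm:rendezvous}, now carrying the control constraint (\ref{eq:control_constraints}) explicitly through the minimum principle. First I would form the Hamiltonian $H(\varepsilon,U,\lambda)$ exactly as in the proof of Theorem~\ref{thm:rendezvous} (with the dynamics (\ref{eq:error_dynamics_bounded})), together with the costate equation $\dot\lambda=-\partial H/\partial\varepsilon$ and the transversality condition $\lambda(t)\to 0$. The essential difference from the unconstrained case is that the stationarity condition $\partial H/\partial U=0$ is replaced by the requirement that $U^*(t)$ pointwise minimize $H(\varepsilon(t),\cdot,\lambda(t))$ over the admissible box $U_{min}\le U\le U_{max}$. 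Since $R\succ 0$, $H$ is strictly convex in $U$, so this pointwise minimizer is unique; and because the running cost is convex, the dynamics affine, and the constraint set convex, the PMP conditions obtained this way are also sufficient for global optimality.

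Next I would characterise this pointwise minimizer. The unconstrained stationary point of $H(\varepsilon(t),\cdot,\lambda(t))$ is the same $U_{unc}(t)$ that appears in Theorem~\ref{thm:rendezvous}; on any maximal arc where $U_{unc}(t)$ already lies inside the box, the optimal control is unsaturated, and the linear costate ansatz $\lambda=-(I_N\otimes P)\varepsilon$ on that arc reproduces, exactly as before, the ARE (\ref{eq:ARE_bounded}) and the closed form $U_{unc}(t)=-(\mathcal{L}\otimes R^{-1}B^T P)\varepsilon(t)$. When $U_{unc}(t)$ violates a bound, strict convexity of $H$ in $U$ forces the constrained minimizer onto the boundary of the box at the point nearest (in the $R$-weighted metric) to $U_{unc}(t)$; under the diagonal structure of $R$ used for the robot model ($R=rI$) this minimization decouples componentwise, and the nearest admissible point is obtained by clipping each component against its bound. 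This is precisely the three-case formula (\ref{eq:optimal_control_bounded}), i.e.\ $U^*=\mathrm{sat}_{(U_{min},U_{max})}(U_{unc})$.

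It remains to locate the switching instants. On a maximal saturated arc with $U^*\equiv U_{max}$ the error obeys a linear inhomogeneous equation driven by the constant input $U_{max}$, whose variation-of-constants solution from the appropriate initial datum $\bar{\varepsilon}_0$ is precisely the left-hand side of (\ref{eq:switching_Ts1}); on the adjoining unsaturated arc the closed loop is $\dot\varepsilon=[(I_N\otimes A)-(\mathcal{L}\otimes R^{-1}B^T P)]\varepsilon$, whose solution from $\varepsilon_0$ is the right-hand side of (\ref{eq:switching_Ts1}). Imposing continuity of $\varepsilon(\cdot)$ at the boundary-crossing time $T_{s_1}$ --- that is, equating the two propagated trajectories at $t=T_{s_1}$ --- yields exactly (\ref{eq:switching_Ts1}); repeating the argument with $U^*\equiv U_{min}$ and initial datum $\underline{\varepsilon}_0$ gives (\ref{eq:switching_Ts2}). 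Finally, I would confirm that rendezvous persists under saturation by reusing the Lyapunov function $V=\tfrac12\varepsilon^T(I_N\otimes P)\varepsilon$ together with the sector property of the saturation nonlinearity to obtain $\dot V\le 0$, then invoking LaSalle's invariance principle as in Theorem~\ref{thm:rendezvous}.

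The step I expect to be the main obstacle is making the switching analysis fully rigorous. One must establish that the optimal trajectory is composed of finitely many arcs, that after a switch the linear costate relation $\lambda=-(I_N\otimes P)\varepsilon$ is still the correct one on the next unsaturated arc (so that the matching conditions take exactly the stated form), and that the transcendental equations (\ref{eq:switching_Ts1})--(\ref{eq:switching_Ts2}) actually admit the claimed switching times $T_{s_1},T_{s_2}$ and characterise them uniquely. A secondary technical point is the componentwise-clipping claim: it is exact only when the weighting $R$ and the box constraints are aligned with a common diagonal (or block) structure, so that hypothesis (met by $R=rI$ here) should be stated explicitly rather than left implicit.
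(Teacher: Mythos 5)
Your proposal follows essentially the same route as the paper's proof: form the PMP Hamiltonian for (\ref{eq:error_dynamics_bounded}), use the linear costate ansatz to recover the ARE (\ref{eq:ARE_bounded}) and $U_{unc}$, obtain the constrained optimum by projecting $U_{unc}$ onto the admissible box (the saturation structure (\ref{eq:optimal_control_bounded})), and derive (\ref{eq:switching_Ts1})--(\ref{eq:switching_Ts2}) by matching the variation-of-constants solution under the extreme inputs with the closed-loop solution of the unsaturated regime at the switching instants. The caveats you flag --- that componentwise clipping requires $R$ and the box to share a diagonal structure, and that the costate ansatz must remain valid across arcs --- are points the paper itself glosses over, so your outline is, if anything, slightly more careful than the published argument.
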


\begin{proof}
The proof is based on the application of Pontryagin’s Minimum Principle (PMP).

\textit{(i) Formulation of the Hamiltonian:} Define the Hamiltonian for the optimization problem as
\begin{equation} \label{eq:Hamiltonian_bounded}
H(\varepsilon, U, \lambda) = \varepsilon^T Q \varepsilon + U^T R U + \lambda^T\Big[(I_{N}\otimes A)\varepsilon + (I_{N}\otimes B)U\Big],
\end{equation}
where $\lambda\in\mathbb{R}^{mN}$ is the costate vector.

\textit{(ii) Necessary Optimality Conditions:} The state and costate dynamics are given by
\begin{align}
\dot{\varepsilon}^*(t) &= \frac{\partial H}{\partial \lambda} = (I_{N}\otimes A)\varepsilon^*(t) + (I_{N}\otimes B)U^*(t), \label{eq:state_dynamics_PMP} \\
\dot{\lambda}^*(t) &= -\frac{\partial H}{\partial \varepsilon} = -2Q\varepsilon^*(t) - (I_{N}\otimes A^T)\lambda^*(t). \label{eq:costate_dynamics_PMP}
\end{align}

In the unconstrained case, the minimization condition $\frac{\partial H}{\partial U}=0$ leads to
\begin{equation} \label{eq:U_condition}
-2RU^*(t) + (I_{N}\otimes B^T)\lambda^*(t) = 0,
\end{equation}
which implies
\begin{equation} \label{eq:unconstrained_relation}
U_{unc}(t)=\frac{1}{2}R^{-1}(I_{N}\otimes B^T)\lambda^*(t).
\end{equation}
A linear state-feedback ansatz is assumed for the costate:
\begin{equation} \label{eq:costate_ansatz}
\lambda^*(t) = -2(I_{N}\otimes P)\varepsilon^*(t),
\end{equation}
so that the unconstrained control law becomes
\begin{equation} \label{eq:unconstrained_feedback}
U_{unc}(t) = -(\mathcal{L} \otimes R^{-1}B^T P)\varepsilon^*(t),
\end{equation}
where the matrix $P\succ 0$ satisfies the algebraic Riccati equation (\ref{eq:ARE_bounded}).

\textit{(iii) Incorporation of Control Constraints:} Since the control input is bounded as in (\ref{eq:control_constraints}), the optimal control is obtained by projecting $U_{unc}(t)$ onto the admissible set. Hence, the optimal bounded control is given by
\begin{equation} \label{eq:control_projection}
U^*(t)= \Pi_{[U_{min},U_{max}]}\Big(U_{unc}(t)\Big),
\end{equation}
which is equivalent to the bang-bang structure in (\ref{eq:optimal_control_bounded}). In other words, if $U_{unc}(t)$ exceeds the upper bound $U_{max}$ (or falls below the lower bound $U_{min}$), the control saturates accordingly.

\textit{(iv) Switching Conditions:} To determine the switching instants between the saturated and unsaturated regimes, we analyze the state evolution under the extreme controls. When the control is saturated at $U_{max}$, the error dynamics reduce to
\begin{equation} \label{eq:state_Umax}
\dot{\varepsilon}(t)= (I_{N}\otimes A)\varepsilon(t) + (\mathcal{L} \otimes B)U_{max}.
\end{equation}
Its solution can be expressed as
\begin{equation} \label{eq:solution_Umax}
\scalebox{0.8}{$
 \begin{array}{l} 
\varepsilon(t)= \exp\Big((I_{N}\otimes A)t\Big)\bar{\varepsilon}_0 + \int_{0}^{t}\exp\Big((I_{N}\otimes A)(t-\tau)\Big)(\mathcal{L}\otimes B)U_{max}\,d\tau.
\end{array}
$}
\end{equation}
Similarly, when the control is saturated at $U_{min}$, we obtain
\begin{equation} \label{eq:solution_Umin}
\scalebox{0.8}{$
 \begin{array}{l} 
\varepsilon(t)= \exp\Big((I_{N}\otimes A)t\Big)\underline{\varepsilon}_0 + \int_{0}^{t}\exp\Big((I_{N}\otimes A)(t-\tau)\Big)(\mathcal{L}\otimes B)U_{min}\,d\tau.
\end{array}
$}
\end{equation}
In the unsaturated regime, where $U^*(t)=U_{unc}(t)$, the error dynamics are governed by
\begin{equation} \label{eq:state_unsaturated}
\dot{\varepsilon}(t)= \Big[(I_{N}\otimes A) - (\mathcal{L}\otimes R^{-1}B^T P)\Big]\varepsilon(t),
\end{equation}
with solution
\begin{equation} \label{eq:solution_unsaturated}
\varepsilon(t)= \exp\Big(\Big[(I_{N}\otimes A)-(\mathcal{L}\otimes R^{-1}B^T P)\Big]t\Big)\varepsilon_0.
\end{equation}
Matching the state trajectories at the switching instants yields the equalities (\ref{eq:switching_Ts1}) and (\ref{eq:switching_Ts2}).

This completes the proof.
\end{proof}
\begin{remark}
It is worth noting that due to the imposed bounded inputs, the control law in Theorem III.2 achieves a sub-optimal solution for the rendezvous problem in multi-robot systems (MRSs). The interaction topology among robots, the convexity of the performance index, and the specific input constraints jointly influence the optimization outcome. A key challenge in designing distributed optimal control strategies for MRSs is the careful selection of weighting matrices and costate variables to ensure that the optimization problem is tractable under a global interaction topology while satisfying the imposed input constraints.
\end{remark}

\begin{theorem} \label{thm:optimal_rendezvous_saturation}
Consider the global error dynamics given in (\ref{eq:global_error_dynamics}) subject to input saturation. Under the distributed control protocol (\ref{eq:control_constraints}), the multi-robot system achieves optimal rendezvous; that is, the state error converges to zero as $t\to\infty$.
\end{theorem}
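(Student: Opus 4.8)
The plan is to prove convergence by exhibiting a common Lyapunov function for the closed loop obtained by feeding the projected (bang--bang) control of Theorem~\ref{thm:bounded_rendezvous} into the error dynamics, and then to close the argument with an invariance principle. Throughout I would make the standing assumptions that are implicit in the robot model: the communication graph $\mathcal{G}$ is undirected and connected (so $\mathcal{L}=\mathcal{L}^T$ has a simple zero eigenvalue and $\varepsilon=(\mathcal{L}\otimes I_m)x$ lies in $\mathrm{range}(\mathcal{L}\otimes I_m)$, i.e.\ it carries no consensus component), the weighting matrix $R$ is diagonal and positive definite (e.g.\ $R=\rho I$), and the admissible input box contains the origin in its interior, $U_{min}<0<U_{max}$ componentwise. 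I take $V(\varepsilon)=\tfrac12\varepsilon^T(I_N\otimes P)\varepsilon$ with $P\succ0$ the ARE solution of \eqref{eq:ARE}. Since $P\succ0$, $V$ is positive definite and radially unbounded, and because $\mathrm{sat}$ is globally Lipschitz, the closed-loop vector field $\varepsilon\mapsto(I_N\otimes A)\varepsilon+(\mathcal{L}\otimes B)\,\mathrm{sat}\!\big(U_{unc}(\varepsilon)\big)$ is continuous, so solutions exist and are unique and $V$ is absolutely continuous along them.

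\textbf{Decrease of $V$.} First I would differentiate $V$ along trajectories. Writing $U^*=\mathrm{sat}(U_{unc})$, using $\mathcal{L}=\mathcal{L}^T$ together with the identity $(\mathcal{L}\otimes B^TP)\varepsilon=-(I_N\otimes R)\,U_{unc}$ that follows from \eqref{eq:unconstrained_control}, and substituting the ARE for $\varepsilon^T(I_N\otimes PA)\varepsilon$, one obtains
\[
\dot V=\tfrac12\,\varepsilon^T\!\big(I_N\otimes(PBR^{-1}B^TP-Q)\big)\varepsilon-U_{unc}^T(I_N\otimes R)\,U^*.
\]
For the robot dynamics at hand $A=0$, so the ARE gives $Q=PBR^{-1}B^TP$ and the first term vanishes, leaving $\dot V=-U_{unc}^T(I_N\otimes R)\,U^*$. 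The key step is the \emph{componentwise sector property of the saturation}: since $0$ lies in the interior of the box and $R$ is diagonal and positive, each scalar term $R_{kk}\,(U_{unc,i})_k\,\mathrm{sat}\big((U_{unc,i})_k\big)$ is nonnegative, hence $U_{unc}^T(I_N\otimes R)\,U^*\ge0$ and $\dot V\le0$. In the unsaturated regime $U^*=U_{unc}$ and this reproduces the rate of Theorem~\ref{thm:rendezvous}; in the saturated regime the inequality holds without using the precise value of $U^*$.

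\textbf{Invariance and conclusion.} The sublevel sets of $V$ are compact and forward invariant, so every trajectory is bounded and, by LaSalle's invariance principle, converges to the largest invariant set $M\subseteq\{\varepsilon:\dot V(\varepsilon)=0\}$. On $M$ we need $U_{unc}^T(I_N\otimes R)\,U^*=0$, and since the origin is interior to the box this forces $U^*\equiv0$, hence $U_{unc}=-(\mathcal{L}\otimes R^{-1}B^TP)\varepsilon\equiv0$, i.e.\ $\varepsilon\in\ker(\mathcal{L}\otimes B^TP)$. Because $\mathcal{L}$ is connected and $B^TP$ has full column rank ($P\succ0$), that kernel is the consensus subspace $\mathrm{span}(\mathbf 1)\otimes\mathbb{R}^m$; intersecting with $\mathrm{range}(\mathcal{L}\otimes I_m)$, to which $\varepsilon$ belongs by construction, gives $\varepsilon=0$. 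Thus $M=\{0\}$ and $\varepsilon(t)\to0$, i.e.\ the robots rendezvous; optimality over admissible (bounded) inputs is inherited from the PMP characterization of Theorem~\ref{thm:bounded_rendezvous}, and asymptotically the saturation becomes inactive near the origin so the control coincides with the unconstrained optimum of Theorem~\ref{thm:rendezvous}.

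\textbf{Main obstacle.} The crux is the second step: showing $\dot V\le0$ despite the saturation being applied \emph{componentwise} while the network coupling enters through $\mathcal{L}\otimes B$. This is exactly where the standing assumptions are needed---diagonal $R$ so the componentwise sector inequality survives the quadratic weighting, and $0$ interior to the box so the saturation is passive with respect to $V$ and inactive near the origin. I also expect a minor technical point in the LaSalle step (ruling out consensus-direction drift), which is handled cleanly by the observation $\varepsilon\in\mathrm{range}(\mathcal{L}\otimes I_m)$. For a general (nonzero, non-Hurwitz) $A$ the indefinite term $\tfrac12\varepsilon^T(I_N\otimes(PBR^{-1}B^TP-Q))\varepsilon$ no longer vanishes and an extra condition such as $PBR^{-1}B^TP\preceq Q$ would be required; this complication does not arise for the single-integrator robot model considered here.
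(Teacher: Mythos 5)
Your proof is correct under your stated standing assumptions, but it takes a genuinely different route from the paper. The paper works with a Lur\'e-type Lyapunov function built from the saturation nonlinearity, $V(\varepsilon)=\sum_{i}\int_{0}^{\varepsilon_i}P_i\,\mathrm{sat}_{(U_{i,\min},U_{i,\max})}(K_i s)\,ds$, differentiates it along the saturated error dynamics, and then simply asserts (``it can be shown that'') the key bound $\dot V\le-\mathrm{sat}^T(K\varepsilon)\,Q\,\mathrm{sat}(K\varepsilon)$ before invoking LaSalle, closing with the claim that $\mathrm{sat}(K\varepsilon)=0$ implies $\varepsilon=0$ ``by design.'' You instead keep the quadratic ARE-based function $V=\tfrac12\varepsilon^T(I_N\otimes P)\varepsilon$ and derive the decrease explicitly: the ARE removes the $A$-dependent term (since $A=0$ for the robot model), and the componentwise sector inequality $u\,\mathrm{sat}(u)\ge 0$ — valid because $R$ is diagonal and the input box contains the origin — gives $\dot V=-U_{unc}^T(I_N\otimes R)U^*\le 0$. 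Your LaSalle step is also more careful than the paper's: the zero sector term forces $U_{unc}=0$ (hence $U^*=0$), and you characterize $\ker(\mathcal{L}\otimes B^TP)$ as the consensus subspace and intersect it with $\mathrm{range}(\mathcal{L}\otimes I_m)$ to exclude consensus-direction drift, a point the paper glosses over. What your route buys is a complete, checkable argument with the hypotheses made explicit (undirected connected graph, diagonal $R$, $U_{min}<0<U_{max}$, single-integrator dynamics); what it gives up is generality: symmetry of $\mathcal{L}$ is essential to your cross-term manipulation (the paper's experimental topology is directed), and for nonzero $A$ you would need the additional condition $PBR^{-1}B^TP\preceq Q$, whereas the paper's saturation-integral function is implicitly aimed at higher-order dynamics — though the paper never supplies the missing inequality, so for the model actually considered your version is the more rigorous of the two.
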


\begin{proof}
We begin by proposing the following Lyapunov function candidate:
\begin{equation} \label{eq:lyapunov_candidate}
V(\varepsilon) = \sum_{i=1}^{N} \int_{0}^{\varepsilon_{i}} P_{i}\,\mathrm{sat}_{(U_{i,\min},U_{i,\max})}(K_{i}s)\, ds,
\end{equation}
where $P_i > 0$ and $K_i$ are appropriately chosen matrices for each robot. By the properties of the saturation function, it follows that $V(\varepsilon) \ge 0$ for all $\varepsilon$, with $V(\varepsilon)=0$ if and only if $\varepsilon = 0$.

Next, we compute the time derivative of $V(\varepsilon)$ along the trajectories of the global error dynamics:
\begin{align}
\dot{V}(\varepsilon) &= \sum_{i=1}^{N} \mathrm{sat}_{(U_{i,\min},U_{i,\max})}^T(K_{i}\varepsilon_{i})\, P_{i}\,\dot{\varepsilon}_{i} \nonumber \\
&= \mathrm{sat}_{(U_{\min},U_{\max})}^T(K\varepsilon)\, P\,\dot{\varepsilon}, \label{eq:V_dot1}
\end{align}
where the stacked matrices $P$ and $K$ are defined appropriately, and the saturation function is applied element-wise.

Substituting the global error dynamics
\begin{equation*}
\dot{\varepsilon} = (I_{N}\otimes A)\varepsilon + (\mathcal{L} \otimes B)\,\mathrm{sat}_{(U_{\min},U_{\max})}(K\varepsilon)
\end{equation*}
into (\ref{eq:V_dot1}), we obtain
\begin{align}
\dot{V}(\varepsilon) &= \mathrm{sat}_{(U_{\min},U_{\max})}^T(K\varepsilon)\, P\, \big[(I_{N}\otimes A)\varepsilon \nonumber \\
&\quad + (\mathcal{L} \otimes B)\,\mathrm{sat}_{(U_{\min},U_{\max})}(K\varepsilon)\big]. \label{eq:V_dot2}
\end{align}

By appropriately designing the control gain matrices and invoking the properties of the weighting matrices (as in the derivation of the unsaturated optimal control), it can be shown that
\begin{equation} \label{eq:V_dot_final}
\dot{V}(\varepsilon) \le -\mathrm{sat}_{(U_{\min},U_{\max})}^T(K\varepsilon)\, Q\, \mathrm{sat}_{(U_{\min},U_{\max})}(K\varepsilon) \le 0,
\end{equation}
where $Q>0$. Note that even though the term $P(I_{N}\otimes A)$ may have eigenvalues at zero (especially in high-order dynamics), the negative definiteness of the second term ensures that $\dot{V}(\varepsilon)$ is negative semi-definite.

By invoking LaSalle's invariance principle, we conclude that the error $\varepsilon(t)$ converges to the largest invariant set in which $\dot{V}(\varepsilon)=0$. This set is characterized by $\mathrm{sat}_{(U_{\min},U_{\max})}(K\varepsilon)=0$, which, by design, implies $\varepsilon=0$. Hence, the multi-robot system achieves rendezvous under the control protocol (30).
\end{proof}

\begin{remark}
For the special case of first-order dynamics and given the saturation bounds, the final rendezvous error $\varepsilon^*$ satisfies:
\begin{enumerate}
    \item If $U_{i,\min} < 0$ and $U_{i,\max} > 0$ for $i=1,2,\dots,N$, then $\varepsilon_{\min}(0) < \varepsilon^* < \varepsilon_{\max}(0)$.
    \item If $U_{i,\min} = 0$ and $U_{i,\max} > 0$ for $i=1,2,\dots,N$, then $\varepsilon^* = \varepsilon_{\max}(0)$.
    \item If $U_{i,\min} < 0$ and $U_{i,\max} = 0$ for $i=1,2,\dots,N$, then $\varepsilon^* = \varepsilon_{\min}(0)$.
    \item If $U_{i,\min} = U_{i,\max} = 0$ for $i=1,2,\dots,N$, then $\varepsilon^* = \varepsilon(0)$.
\end{enumerate}
These conditions illustrate the effect of the saturation bounds on the final rendezvous state.
\end{remark}

\begin{algorithm}[!t]
\caption{Optimal Rendezvous Protocol for Mobile Robots with First-Order Dynamics}
\label{alg:rendezvous_protocol}
\begin{algorithmic}[1]
\Require 
    \Statex Positions: $x_{i}(0) \in \mathbb{R}^{n_i}$ for $i = 1,\ldots, N$, 
    \Statex $\{ x_j(0) \}_{j \in \vartheta \setminus \{i\}}$, and tolerance $\epsilon \ll 0.05$
\Ensure 
    \Statex Speed commands $v_{i}(t)$, positions $x_{i}(t)$, and performance indices $J_i$
    
\While{$\| x_{i}(t) - x_{j}(t) \| > \epsilon,\ \forall\, i,j$}
    \State Solve the matrix Riccati equation:
        \[
            P A + A^T P + Q - P B R^{-1} B^T P = 0
        \]
    \State Compute the unconstrained control (velocity):
        \[
            v_{i}(t) = u_{i}(t) = -K \sum_{j=1}^{N} a_{ij}(t)\bigl(x_{i}(t)-x_{j}(t)\bigr)
        \]
    \State Update the robot positions using:
        \[
            \dot{x}_{i}(t) = -K \sum_{j=1}^{N} a_{ij}(t)\bigl(x_{i}(t)-x_{j}(t)\bigr)
        \]
    \If {$R^{-1}B\,\lambda_{i}^*(t) > u_{i,\text{max}}$}
        \State Determine the switching time $t_{s2}$ from:
            \[
            u_{i,\text{min}}\, t_{s2} + x_{i}(0) = \exp(-R^{-1}B^T P\, t_{s2})\,x_{i}(0)
            \]
        \State Set 
            \[
            x_{i}(t) = u_{i,\text{min}}\,t + x_{i}(0)
            \]
    \ElsIf {$R^{-1}B\,\lambda_{i}^*(t) < u_{i,\text{min}}$}
        \State Determine the switching time $t_{s1}$ from:
            \[
            u_{i,\text{max}}\, t_{s1} + x_{i}(0) = \exp(-R^{-1}B^T P\, t_{s1})\,x_{i}(0)
            \]
        \State Set 
            \[
            x_{i}(t) = u_{i,\text{max}}\,t + x_{i}(0)
            \]
    \ElsIf {$u_{i,\text{min}} < u_{i}(t) < u_{i,\text{max}}$}
        \State Set 
            \[
            x_{i}(t) = \exp(-R^{-1}B^T P\, t)\,x_{i}(0)
            \]
    \EndIf
    \State Compute the performance index:
        \[
            J_i = \int_{0}^{\infty} \frac{1}{2}\Bigl(\varepsilon_{i}^T Q \varepsilon_{i} + u_{i}^T R u_{i}\Bigr)dt
        \]
\EndWhile
\end{algorithmic}
\end{algorithm}

\section{\textbf{Case study}}
In this section, we implement and test the consensus algorithm 1 using (i) a simulated example and (ii) application to a team of e-puck2 mobile robots. Our objective is to design and implement optimal consensus protocol such that mobile robots achieve average consensus, {\it i.e.}, they converge to the agreement point and optimize the energy cost performance index. We analyze the convergence speed and energy cost for the proposed consensus algorithm by choosing different control gains. Note that, we consider the first-order dynamics for the consensus of mobile robots which is represented as follows:
\begin{equation} \label{eq:equation41}
       \begin{array}{l} 
             \boldsymbol{\dot x}_{i}(t)= v_{i}(t) \ \ \  i=1,2,...,N
            \end{array}
\end{equation}
where $\boldsymbol{ x}_{i}(t)$ and $v_{i}(t)$ are the position information and speed of mobile robots along x-axis, respectively. 
\subsection{Simulation Results}
\begin{figure}[h!] 
  \begin{center}
\includegraphics[width=0.3 \textwidth]{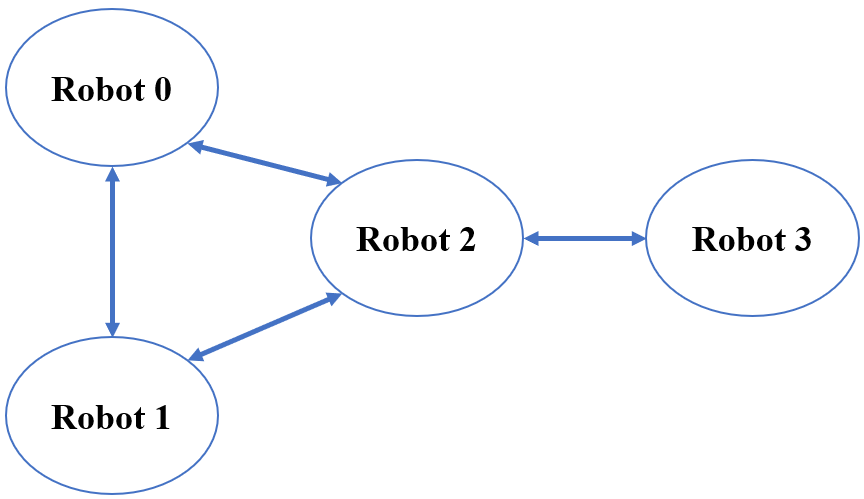}
  \end{center}
  \caption{Communication topology of the MAS} 
  \label{Fig3}
\end{figure} 

\begin{figure}[h!] 
  \begin{center}
\includegraphics[width=0.45 \textwidth]{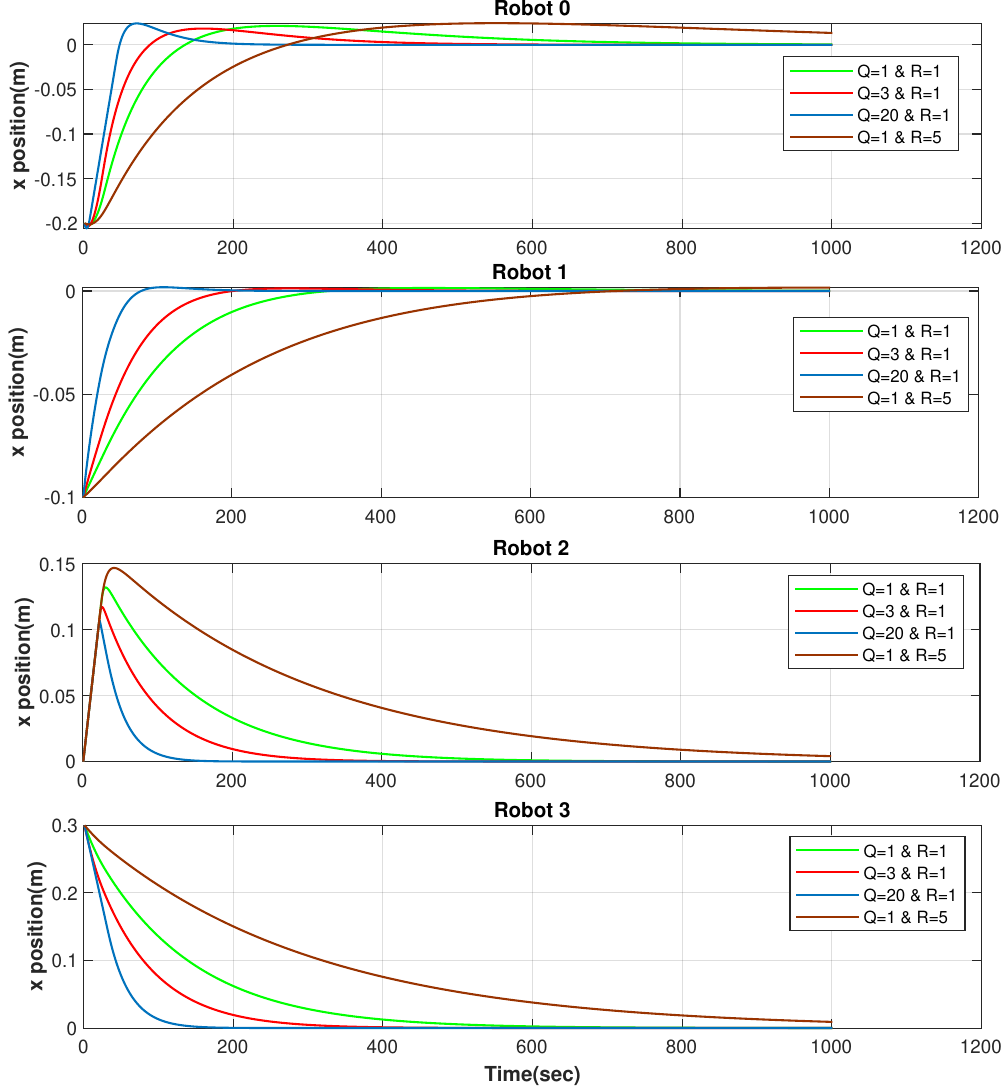}
  \end{center}
  \caption{Simulation results for optimal consensus algorithm 1: x positions of four mobile robots for different $Q$ and $R$.} 
  \label{Fig3}
\end{figure} 
\begin{figure}[h!] 
  \begin{center}
\includegraphics[width=0.45 \textwidth]{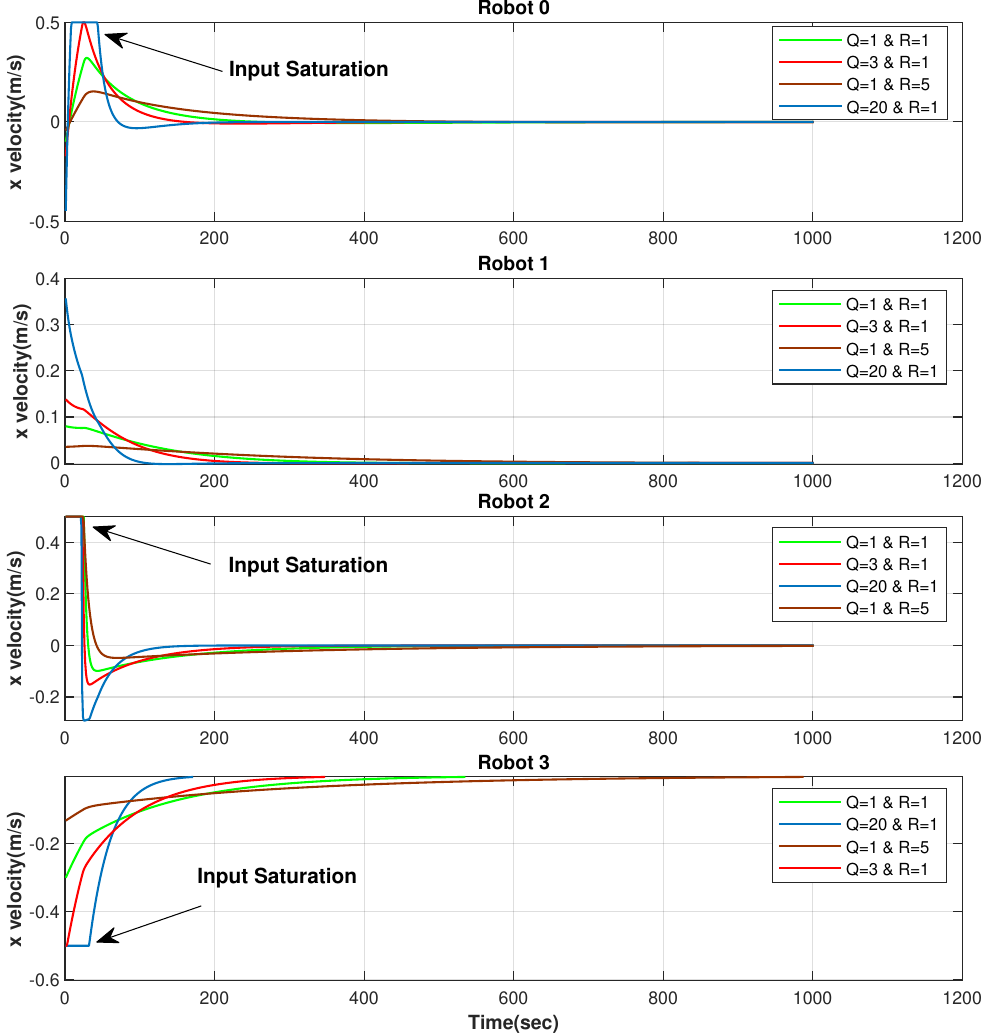}
  \end{center}
  \caption{Simulation results for optimal consensus algorithm 1: x velocities of four mobile robots for different $Q$ and $R$.}
  \label{Fig3}
\end{figure} 

\begin{figure}[h!] 
  \begin{center}
\includegraphics[width=0.45 \textwidth]{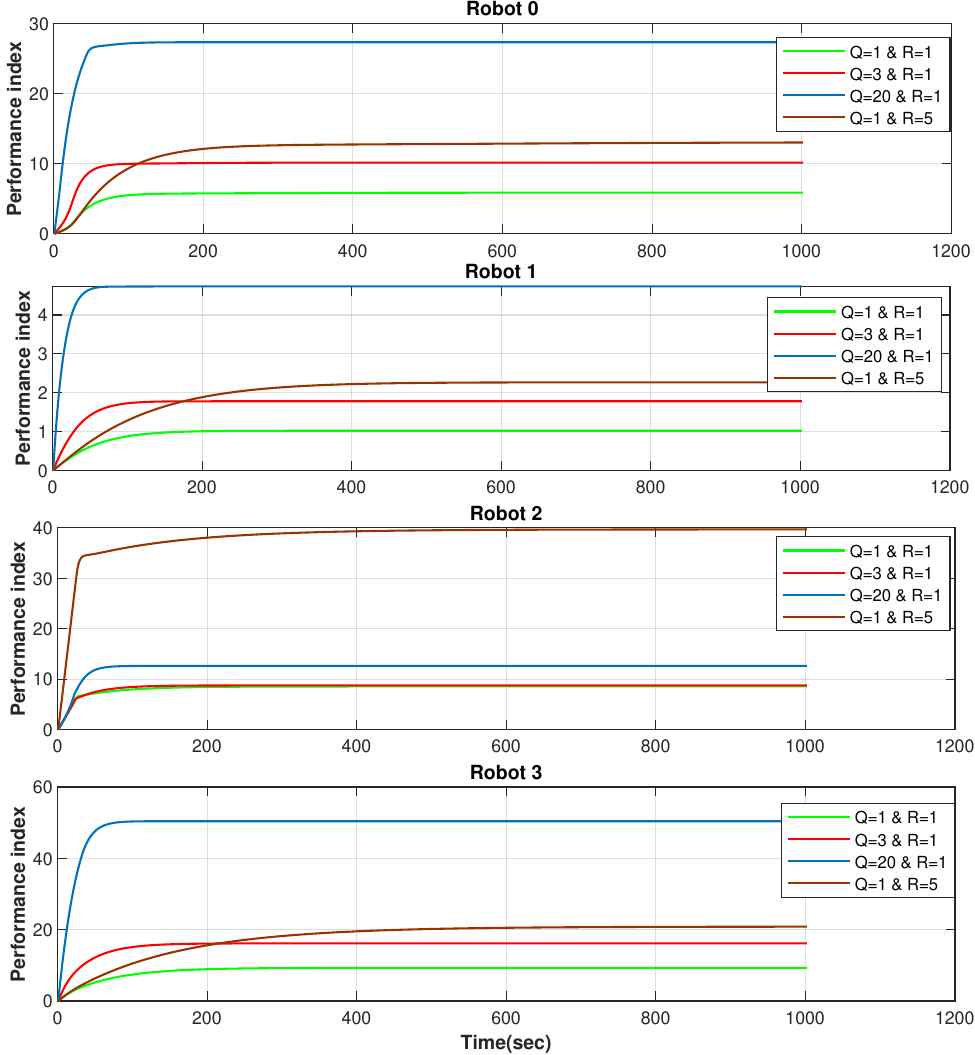}
  \end{center}
  \caption{Simulation results for optimal consensus algorithm 1: performance index (cost function) of four mobile robots for different $Q$ and $R$.}
  \label{Fig3}
\end{figure} 
In this section, we conduct couple of simulations under
MATLAB/Simulink environment to verify the effectiveness of
the designed optimal consensus control method for a group
of mobile robots.
Fig. 1 shows the communication topology of the MAS consisting of four mobile robots. The initial state of the robots are $x_{0}(0)=-0.2, x_{1}(0)=-0.1, x_{2}(0)=0$ and  $x_{3}(0)=0.3$. The matrix $P$ and control gain $K$ can be obtained using Theorem III.1  for any arbitrary choice of the matrices Q and R. Also, the mobile robot's wheel velocities are considered within the
bound of 0.5 m/s. Fig. 2, Fig. 3 and Fig. 4 show the simulation results of
the position, velocity and performance index for different matrices $Q$ and $R$. As expected, the results indicate that consensus is reached for all robots with different convergence speeds. In the case of $Q=3$ and $R=1$, the state cost term is penalized more compared to the input cost term, which means that the position response will have faster convergence and small transient response. However, the control effort magnitude will be larger. In the case of $Q=1$ and $R=5$, the input cost term is penalized compared to the state cost term. Therefore, the position response becomes more sluggish with larger transients and reduced control effort. When $Q=20$ and $R=1$, the velocity of the mobile robots is saturated with -0.5 $m/s$ and 0.5 $m/s$. In this case, the LQR method is no longer effective. Therefore, we consider Theorem III.2 for the optimal consensus of mobile robots where the input (velocity) is saturated. It can be seen that, whenever the speed of robots saturated, based on the result of Theorem  III.2 the optimal consensus control is maximum velocity of robots. Also, the motion of robots in this region will be linear. Once the control input satisfy within -0.5 $m/s$ and 0.5 $m/s$, the optimal consensus control will be obtained using  $ u_{i}(t)=-R^{-1}B^T P\sum_{j=0}^{N} { {a}}_{ij}({ {x}}_{i}(t)-{{x}}_{j}(t))$, which the motion of robots exponentially converge to the the agreement position. Also, the switching time is the time that the motion of robots switch from the linear motion to exponential motion.

\subsection{Experimental Validation}
\begin{figure}[h!] 
  \begin{center}
\includegraphics[width=0.45 \textwidth]{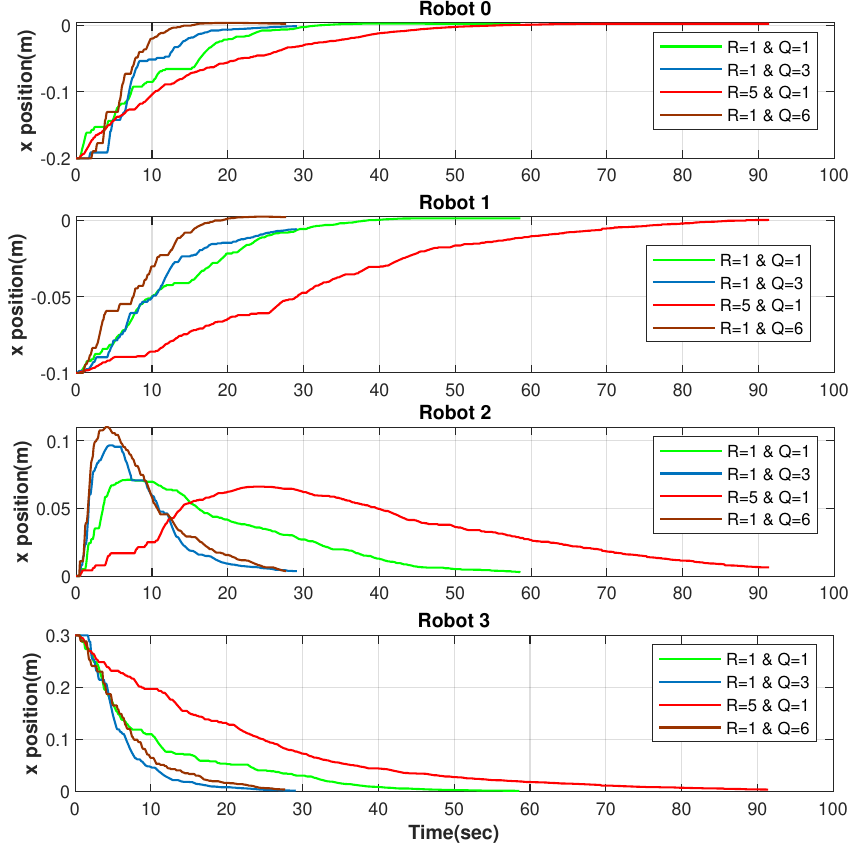}
  \end{center}
  \caption{Experimental testing results for optimal consensus algorithm 1: x positions of four mobile robots for different $Q$ and $R$.} 
  \label{Fig3}
\end{figure} 
\begin{figure}[h!] 
  \begin{center}
\includegraphics[width=0.45 \textwidth]{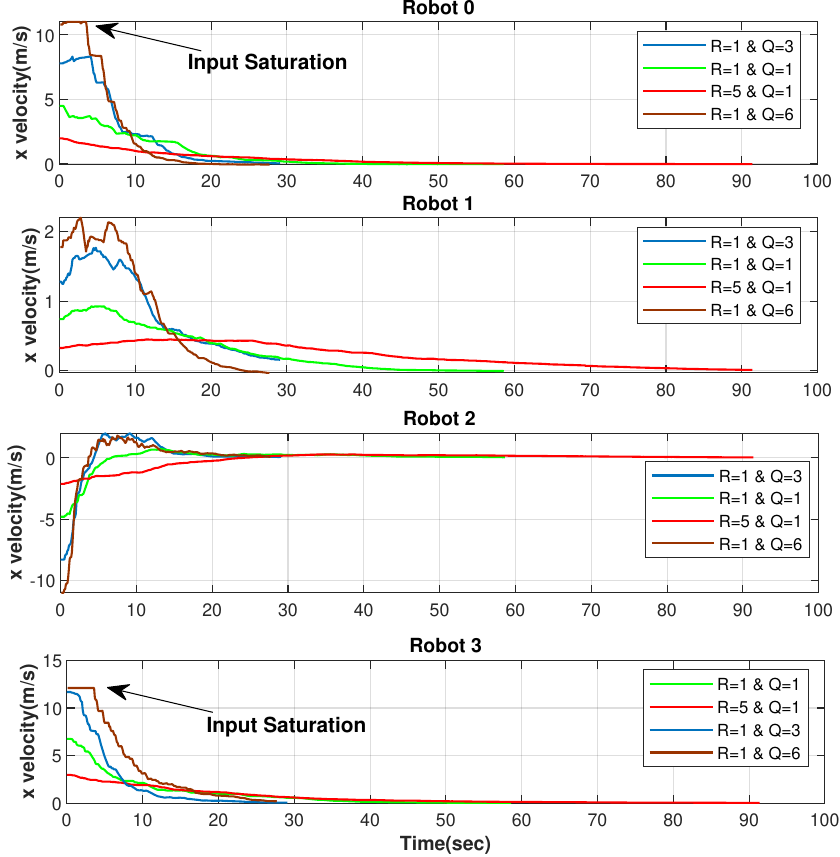}
  \end{center}
  \caption{Experimental testing results for optimal consensus algorithm 1: x velocities of four mobile robots for different $Q$ and $R$.} 
  \label{Fig3}
\end{figure} 

\begin{figure}[h!] 
  \begin{center}
\includegraphics[width=0.45 \textwidth]{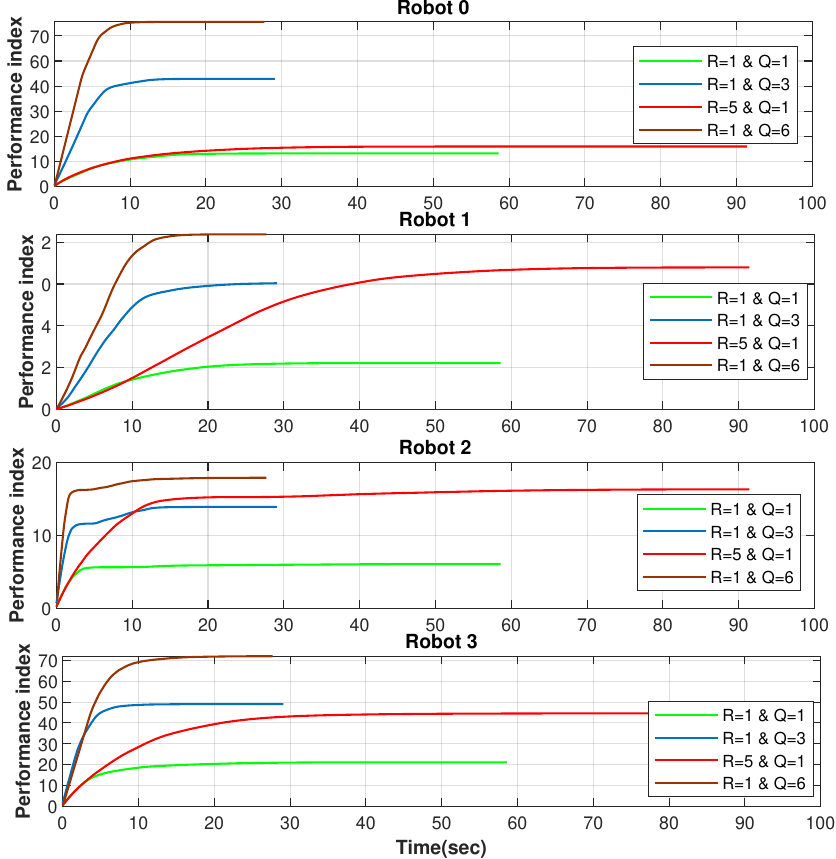}
  \end{center}
  \caption{Experimental testing results for optimal consensus algorithm 1: performance index (cost function) of four mobile robots for different $Q$ and $R$.} 
  \label{Fig3}
\end{figure} 

In this section, we validate the feasibility of the proposed optimal rendezvous algorithm by performing simulation experiments in the Robotarium platform. 

In our experiments, each robot relies on odometry measurements to navigate towards the desired position. Communication among the robots is achieved over a network, which introduces inherent time delays due to limited bandwidth. Initially, the robots are positioned at distinct locations (i.e., $x_{\text{robot0}}(0) = -0.2$, $x_{\text{robot1}}(0) = -0.1$, $x_{\text{robot2}}(0) = 0$, and $x_{\text{robot3}}(0) = 0.3$) with zero initial velocities. The left and right wheel speeds of the Robotrium robots are constrained such that $\| v_{l} \| \leq 11\,\mathrm{cm/s}$ and $\| v_{r} \| \leq 11\,\mathrm{cm/s}$.

Fig. 1 illustrates the communication topology of the multi-robot system, which plays a crucial role in determining the convergence speed of the system, as it is related to the smallest positive eigenvalue of the Laplacian matrix. Based on the communication topology, the third robot has a directed path to all other robots, and consequently, it converges to the desired position faster than its counterparts.

The optimal rendezvous control is applied to update the desired position of each robot at every time instant upon receiving updated position information over the network. The control updates are transmitted periodically to the rendezvous controller with a period of approximately 0.1 s.

 Figs. 5, 6, and 7 present the experimental results for position, velocity, and performance index under different choices of $Q$ and $R$. As expected, when the state (i.e., position) is penalized more heavily than the control effort (i.e., speed), a faster convergence is achieved at the cost of higher control inputs. Conversely, when the control effort is penalized more heavily, the position response exhibits slower transients. In the specific case of $Q=6$ and $R=1$, the velocities of the robots saturate at $-11\,\mathrm{cm/s}$ and $11\,\mathrm{cm/s}$, resulting in the application of the maximum or minimum allowable speed as the optimal control input. When operating within the linear region (i.e., below the saturation limits), the robot positions converge exponentially to the agreement position.

The experimental results confirm that our optimal rendezvous algorithm offers a robust framework for distributed multi-robot cooperative control, effectively handling physical limitations, wheel velocity constraints, packet loss, and time delays.

\section{Conclusion}
In this paper, we proposed an optimal rendezvous algorithm for multi-robot systems, aimed at achieving consensus in position while accounting for physical limitations, communication delays, and constraints on control inputs. Through extensive simulations and experimental validation using Robotrium platform, we demonstrated that the algorithm provides efficient convergence to a desired agreement position. The experimental results confirm that the proposed approach is robust under various conditions, such as limited communication bandwidth, time delays, and velocity constraints. Additionally, we analyzed the impact of different penalty parameters on convergence speed and control effort, highlighting the trade-off between faster convergence and higher control inputs. The results also show that the algorithm effectively handles robot physical limitations, ensuring stable and reliable multi-robot coordination in a distributed setup.

Future work will focus on extending the algorithm to handle more complex scenarios, including the presence of dynamic obstacles, variable network topologies, and more stringent time-varying constraints. Furthermore, we plan to investigate the scalability of the algorithm in larger multi-robot systems and explore its application in real-world robotic platforms for tasks such as autonomous transportation, multi-robot exploration, and cooperative manipulation.

\end{document}